%% file: main.tex
\documentclass{dukecei}

\usepackage{microtype}
\usepackage{graphicx}
\usepackage{subcaption}
\usepackage{booktabs} % for professional tables

\usepackage{hyperref}

\usepackage{amsmath}
\usepackage{amssymb}
\usepackage{mathtools}
\usepackage{amsthm}
\usepackage{booktabs}

\title{Swimba: Switch Mamba Model Scales State Space Models}

\author{
\centering
    Zhixu Du$^{1}$\footnote{Correspondence E-mail: zhixu.du@duke.edu},
    Krishna Teja Chitty-Venkata$^{2}$,
    Murali Emani$^{3}$,
    Venkatram Vishwanath$^{3}$\\
    Hai Helen Li$^{1}$,
    Yiran Chen$^{1}$\\[1em]
    \normalsize $^{1}$Duke University \\
    \normalsize $^{2}$Red Hat, Inc. \\
    \normalsize $^{3}$Argonne National Laboratory\\
}

\begin{document}

\maketitle
\thispagestyle{firstpagestyle} % Draws the header on the first page

\begin{abstract}
Mixture-of-experts (MoE) is a common approach for increasing parameter capacity, but applying MoE to state space model (SSM) token mixers can multiply the cost of the recurrent state update. We study how to introduce expert specialization into selective SSMs while preserving computational efficiency. We show that MoE--SSM can refer to two designs: (1) MoE over separated SSMs, which maintains multiple state trajectories and thus scales compute with the number of experts; and (2) MoE-parameterized SSM, which mixes experts in parameter space, maintains a single state trajectory, and evaluates the recurrence once. Our method, Switch Mamba (Swimba), follows the second design by routing over expert-produced SSM streams. Theoretically, we establish well-definedness and stability for MoE-parameterized SSMs and characterize the relationship between the two designs. Empirically, we evaluate Swimba on standard benchmark tasks and measure real-time throughput and latency. Under matched FLOPs, Swimba achieves slightly better average performance than the baseline, with a small slowdown in real-time latency and throughput. Overall, these results suggest that parameter-space MoE can increase SSM capacity while keeping the dominant recurrence cost fixed.
\end{abstract}

\input{src/introduction}

\input{src/background}
\input{src/swimba}
\input{src/experiments}

\input{src/related_work}
\input{src/conclusion}

\section*{Acknowledgments} 
This research is supported by NSF-2112562 and ARO W911NF-23-2-0224. This research used resources of the Argonne Leadership Computing Facility, which is a U.S. Department of Energy Office of Science User Facility operated under contract DE-AC02-06CH11357.

\bibliographystyle{abbrvnat}
\bibliography{reference}

\newpage
\appendix
\onecolumn
\input{src/appendix}

\end{document}

%% file: src/introduction.tex
\section{Introduction}
\label{sec:introduction}
State space models (SSMs) have become a practical alternative to attention for long-sequence modeling, combining recurrence with modern accelerator-friendly implementations. Recent selective SSM architectures such as Mamba \cite{gu2023mamba} and its SSD reformulation in Mamba-2 \cite{dao2024transformers} show that linear-time token mixing can be competitive with full attention. At the same time, scaling language models often relies on mixture-of-experts (MoE) to increase parameter count without proportionally increasing computational overhead \cite{shazeer2017outrageously,fedus2022switch}. Most MoE work targets feed-forward blocks, where expert activation does not replicate a costly recurrent state update, with only a few works applying MoE to attention \cite{zhang2022mixture, csordas2024switchhead}.
Extending MoE to SSM token mixers can retain the $O(L)$ computational complexity while boosting layer capacity through MoE scaling. However, it introduces a different challenge. The core recurrence is the dominant cost in SSMs, so naive implementation can scale compute with the number of experts.

\begin{figure}[t]
    \centering
    \includegraphics[width=0.4\textwidth]{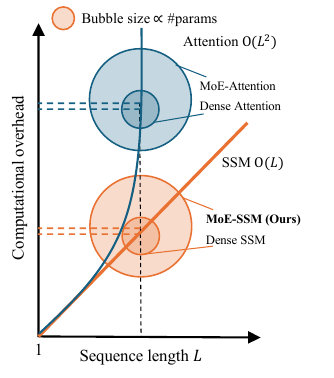}
    \caption{Compute scaling view for token mixing. The x-axis is sequence length $L$ and the y-axis is per-token computational overhead. Attention scales as $O(L^2)$ while SSMs scale as $O(L)$. Bubble area indicates parameter count, illustrating that MoE can increase parameters with only a small increase in per-token compute, motivating Swimba (MoE-SSM) designs.}
    \label{fig:placeholder}
\end{figure}

\textit{MoE-SSM} can refer to two different designs, and the distinction matters for both efficiency and modeling behavior. \textbf{MoE of separated SSMs} maintains an expert-specific state trajectory and advances multiple recurrences in parallel. This resembles switching dynamics in state-space models \cite{ghahramani2000variational}: each expert can store its own memory, but compute and state storage grow with the number of experts. In contrast, \textbf{MoE-parameterized SSM} keeps a single state trajectory and mixes experts in parameter space. Instead of running multiple recurrences and combining outputs, it forms one effective selective SSM by aggregating expert-dependent injection and readout streams, then evaluates the recurrence once with an SSD-style computation \cite{dao2024transformers}.

Much of the prior MoE--SSM hybrid literature increases capacity by interleaving MoE MLP blocks with dense SSM mixers \cite{lieber2024jamba,pioro2024moe,anthony2024blackmamba}. For approaches that apply MoE within the SSM block, existing work is largely engineering-driven and often does not distinguish between the two MoE--SSM designs \cite{zhan2025routing}. Without a clear taxonomy, it is easy to conflate these designs and their scaling properties.

This paper focuses on MoE-parameterized SSMs. We present \textbf{Switch Mamba (Swimba)}, an MoE-parameterized SSM layer that preserves a single-pass recurrence. Swimba builds on Mamba-2~\cite{dao2024transformers}: each expert produces candidate selective SSM streams, and a token-level router computes mixture weights. We then mix the expert-dependent SSM streams in parameter space and run a single SSM evaluation to update the recurrence. In our implementation, specialization enters through routing-weighted SSM streams, while state evolution remains a single trajectory. This design targets the central efficiency goal of SSMs: it avoids replicating the expensive recurrence across experts, while scaling the number of model parameters.

We also provide formal theoretical statements that motivate the MoE-parameterized SSM design choice. Specifically, Theorem~\ref{thm:moe_structure} shows that mixing in parameter space preserves the single-SSM structure required by Mamba-2 for efficiency. Theorem~\ref{thm:moe_complexity} shows that the recurrence cost does not scale with the number of experts. Under a contractive transition, Theorem~\ref{thm:moe_stability} shows that stability can be controlled through bounds on the mixed streams. Finally, we relate MoE-parameterized SSMs to the MoE of separated SSMs baseline: Theorem~\ref{thm:moe_approx} identifies regimes where the two agree, and Theorem~\ref{thm:moe_expressive} characterizes the mismatch introduced when routing varies over time, while still allowing increased expressivity from input-dependent mixtures.

Empirically, we build Swimba on the Nemotron-H-8B \citep{blakeman2025nemotron} hybrid backbone, replacing each Mamba-2 token-mixing layer with our MoE-parameterized SSM layer while keeping the rest of the architecture unchanged. We evaluate against the Nemotron-H-8B baseline on standard benchmark tasks. We compare model FLOPs to the baseline and benchmark end-to-end decoding with \texttt{vLLM} \citep{kwon2023efficient} under matched settings. Across these evaluations, Swimba achieves slightly better average performance than the baseline at comparable total FLOPs, while showing a small slowdown in real-time latency and throughput. Prior results~\citep{chitty2025moe} indicate that increasing the number of experts while keeping the number of active experts fixed has limited impact on latency and throughput, suggesting that Swimba retains favorable scaling behavior. 

Our contributions can be summarized as follows:
\begin{itemize}
\item We distinguish two MoE--SSM designs, \emph{MoE of separated SSMs} and \emph{MoE-parameterized SSMs}, and explain how this distinction determines scaling in compute and memory. We provide theoretical results on well-definedness, stability, and the relationship between the two designs.
\item We introduce Swimba, an MoE-parameterized SSM layer that mixes expert-dependent streams in parameter space and preserves SSM computation as a single recurrence evaluation.
\item We evaluate Swimba on standard benchmarks and report compute- and decoding-oriented measurements (FLOPs analysis and \texttt{vLLM} throughput/latency) under matched settings. Swimba achieves slightly better average performance than the baseline at comparable runtime cost, suggesting strong potential for scaling.
\end{itemize}

%% file: src/background.tex
\section{Background}
\textbf{Notation.} The input sequence has length $T$, with $X\in\mathbb{R}^{T\times P}$ and output $Y\in\mathbb{R}^{T\times P}$. We write $X_t\in\mathbb{R}^{P}$ and $Y_t\in\mathbb{R}^{P}$ for the $t$-th token. We use $h_t\in\mathbb{R}^{N\times P}$ for the latent state, which can be viewed as $P$ independent states $\{h_{t,p}\in\mathbb{R}^{N}\}_{p=1}^P$ broadcast across channels. Parameters $A$, $B$, and $C$ have shapes $\mathbb{R}^{N\times N}$, $\mathbb{R}^{N\times P}$, and $\mathbb{R}^{N\times P}$, respectively. Parameters $A_t$, $B_t$, and $C_t$ refers to the time-varying parameters at time $t$, sharing the same shape as $A$, $B$, and $C$. We use $E$ to denote the total number of experts in an MoE model and $E_e(\cdot)$ to denote the $e$-th expert function.

\subsection{State Space Models}
State Space Models (SSMs) provide a framework for modeling sequences by discretizing a continuous-time system. The system maps a 1-D input function $x(t) \in \mathbb{R} \to y(t) \in \mathbb{R}$ via a latent state $h(t) \in \mathbb{R}^N$ evolving according to the differential equation $h'(t) = {A}h(t) + {B}x(t)$. 
In deep learning, this system is discretized (typically via Zero-Order Hold) with a timescale parameter $\Delta$, resulting in the recurrence:
\begin{equation}
    h_t = \overline{{A}} h_{t-1} + \overline{{B}} x_t, \quad y_t = {C}^\top h_t,
\end{equation}
where the discrete parameters are $\overline{{A}} = \exp(\Delta {A})$ and $\overline{{B}} = (\Delta {A})^{-1}(\exp(\Delta {A}) - {I}) \cdot \Delta {B}$.
Structured SSMs like S4~\cite{gu2021efficiently} constrain the transition matrix ${A}$ (e.g., to be diagonal plus low-rank) to enable efficient training via global convolutions.

\subsection{Mamba and State Space Duality (SSD)}
\textbf{Selective SSMs.} Mamba~\cite{gu2023mamba} introduced the \textit{selection mechanism}, making the parameters ${B}, {C},$ and $\Delta$ functions of the input $x_t$. This time-variance allows the model to filter information based on content but precludes the use of efficient convolutions, necessitating a hardware-aware parallel scan (prefix sum).

\textbf{Mamba-2 and SSD.} Dao and Gu~\cite{dao2024transformers} reformulated selective SSMs under the framework of \textit{State Space Duality} (SSD). They prove that if the transition matrix ${A}$ is structured as a scalar-identity matrix (i.e., ${A} = a{I}$), the recurrent SSM is mathematically dual to a structured attention mechanism. Specifically, the sequence transformation can be viewed as a matrix multiplication $Y = {M}X$, where ${M}$ is a semiseparable matrix masked by the decay $a$.
This duality allows the computation to be decomposed into chunks, utilizing efficient matrix multiplication (Tensor Cores) within chunks and linear recurrence between chunks. We denote this core operator as:
\begin{equation}
    Y = \mathrm{SSD}({A}, {B}, {C}, {X}).
\end{equation}
Crucially, this formulation maintains $\mathcal{O}(T)$ inference complexity while allowing for parallel training.

\subsection{Mixture of Experts and Switching Dynamics}
Mixture of Experts (MoE) scales model capacity without increasing inference cost by conditionally activating subsets of parameters \cite{shazeer2017outrageously}. A learnable router $R(x_t)$ computes a probability distribution $\pi_t$ over $E$ experts and selects a sparse active set of indices $\mathcal{K}t$ (Top-$k$). The output is a weighted sum of the active expert computations:
\begin{equation}
y_t = \sum_{e \in \mathcal{K}t} \pi_{t,e} E_e(x_t).
\end{equation}
We compute routing logits $g_t\in\mathbb{R}^E$ and routing probabilities $\pi_t$ via a softmax:
\begin{equation}
\pi_t=\mathrm{softmax}(g_t),\qquad \sum_{e=1}^E \pi_{t,e}=1.
\label{eq:router_softmax_method}
\end{equation}
We use $\pi_t$ directly as mixture weights. With top-$k$ sparsity, we keep the softmax values on the active set and set the rest to zero, without renormalization. In Transformers, MoE is typically applied to the feed-forward networks (FFN) \cite{fedus2022switch}.

% \textbf{Switching State-Space Models.} Applying MoE to dynamical systems relates to \textit{Switching SSMs}, where the system dynamics switch between different regimes over time~\cite{ghahramani2000variational}.
% Classically, if the transition matrix ${A}$ switches between experts (i.e., $h_t = {A}^{(e)} h_{t-1}$), the exact posterior distribution of the latent state becomes a Gaussian mixture with $E^T$ components, rendering inference computationally intractable.
% This motivates designs that mix parameters in a way that preserves a unimodal belief state, avoiding the exponential explosion associated with switching dynamics.

%% file: src/swimba.tex
% \section{Swimba}
% \subsection{Model Architecture}
% % model architecture 
% % theoretical foundation / support
% \subsection{Pre-Training}

\section{MoE-SSM}
\label{sec:method}

\subsection{Selective SSM and SSD}
\label{sec:ssm_ SSM}

A selective SSM defines the sequence transformation $X\mapsto Y$ by
\begin{equation}
h_t = A_t h_{t-1} + B_t X_t,\qquad
Y_t = C_t^\top h_t.
\label{eq:ssm_selective_method}
\end{equation}
Following Mamba-2's channelwise convention, \eqref{eq:ssm_selective_method} is understood per channel:
$h_{t,p}=A_t h_{t-1,p}+b_{t,p}x_{t,p}$ and $y_{t,p}=c_{t,p}^\top h_{t,p}$.
We denote by
\begin{equation}
Y=\mathrm{ SSM}(A,X,B,C)
\label{eq: SSM_operator_method}
\end{equation}
the structured state-space computation ( SSM) that evaluates \eqref{eq:ssm_selective_method} over $t=1,\dots,T$.
In this work, $\mathrm{ SSM}(\cdot)$ is unchanged.
Our method specifies how the streams $(A_t,B_t,C_t)$ and the effective injected input are produced from a mixture of experts, while the evaluation itself remains a single  SSM pass.

\subsection{MoE-parameterized  SSM}
\label{sec:moe_param_ SSM}

Each expert $e\in\{1,\dots,E\}$ provides candidate selective SSM streams
\[
\{A_t^{(e)},\,B_t^{(e)},\,C_t^{(e)},\,X_t^{(e)}\}_{t=1}^T.
\]
Rather than instantiating $E$ separate dynamical systems, we form a single selective SSM by mixing these expert-dependent quantities in parameter space using $\pi_t$.
As a result, the model maintains one state trajectory $\{h_t\}$ and executes $\mathrm{ SSM}$ once.

In our implementation, the expert-dependent streams $\{B_t^{(e)},C_t^{(e)},X_t^{(e)}\}$ are produced from the same token features $X_t$ using expert-specific linear projections, matching the Mamba-2 style of generating token-dependent parameters.
At the same time, we share the transition across experts and time:
\begin{equation}
A_t^{(e)}\equiv A\qquad \text{for all }e,t,
\label{eq:shared_A_method}
\end{equation}
so expert diversity enters through the injection and readout streams, not through separate transitions.

With routing probabilities $\pi_t$, we form one effective state update by mixing the injected input term and the readout:
\begin{equation}
h_t = A h_{t-1} +
\sum_{e\in\mathcal{K}_t}\pi_{t,e}\,B_t^{(e)}X_t^{(e)},
\label{eq:moe_state_method}
\end{equation}
\begin{equation}
Y_t =
\Big(\sum_{e\in\mathcal{K}_t}\pi_{t,e}C_t^{(e)}\Big)^\top h_t,
\label{eq:moe_out_method}
\end{equation}
where $\mathcal{K}_t$ is either all experts or the top-$k$ active set.
This is the critical point: we never create expert-specific states, so the recurrence is evaluated once, and the  SSM computation remains a single pass.

There are two natural ways to combine MoE with SSMs.
One option is to mix expert outputs after running multiple expert recurrences; the other is to mix expert-dependent parameters and keep a single recurrence.
We choose the second option because it preserves the main computational advantage of  SSM while still allowing token-dependent specialization through $(B_t^{(e)},C_t^{(e)},X_t^{(e)})$ and $\pi_t$.

\subsection{Alternative: MoE of separated SSMs}
\label{sec:moe_sep_ssm}

For completeness, we define MoE of separated SSMs as the design that maintains an independent state per expert:
\begin{equation}
h_t^{(e)} = A h_{t-1}^{(e)} + B_t^{(e)}X_t^{(e)},\qquad
Y_t = \sum_{e\in\mathcal{K}_t}\pi_{t,e}\,(C_t^{(e)})^\top h_t^{(e)}.
\label{eq:sep_ssm_method}
\end{equation}
This design can represent a mixture of distinct hidden trajectories, but it typically requires advancing multiple recurrences and storing expert-specific states, which directly increases memory and compute with the number of active experts.
Fig.~\ref{fig:MoE-SSMs} uses a causal figure to depict the difference between these two MoE--SSM designs.

\vspace{1em}
\begin{figure}[t]
  \centering

  \begin{subfigure}[b]{0.45\textwidth}
    \centering
    \includegraphics[width=0.80\textwidth]{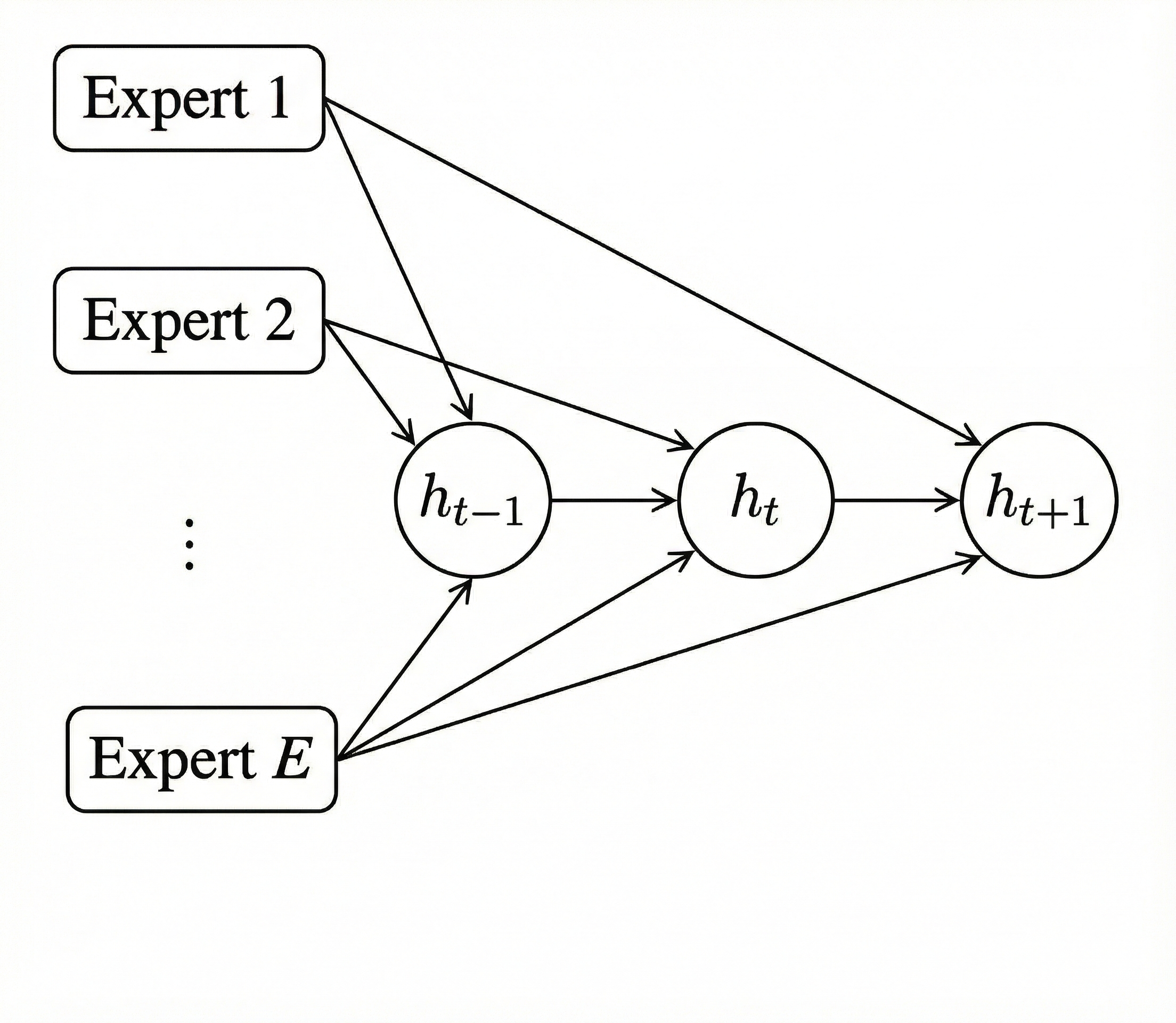}
    \vspace{-2em}
    \caption{}
    \label{fig:stack_a}
  \end{subfigure}
  \hfill
  \begin{subfigure}[b]{0.45\textwidth}
    \centering
    \includegraphics[width=0.75\textwidth]{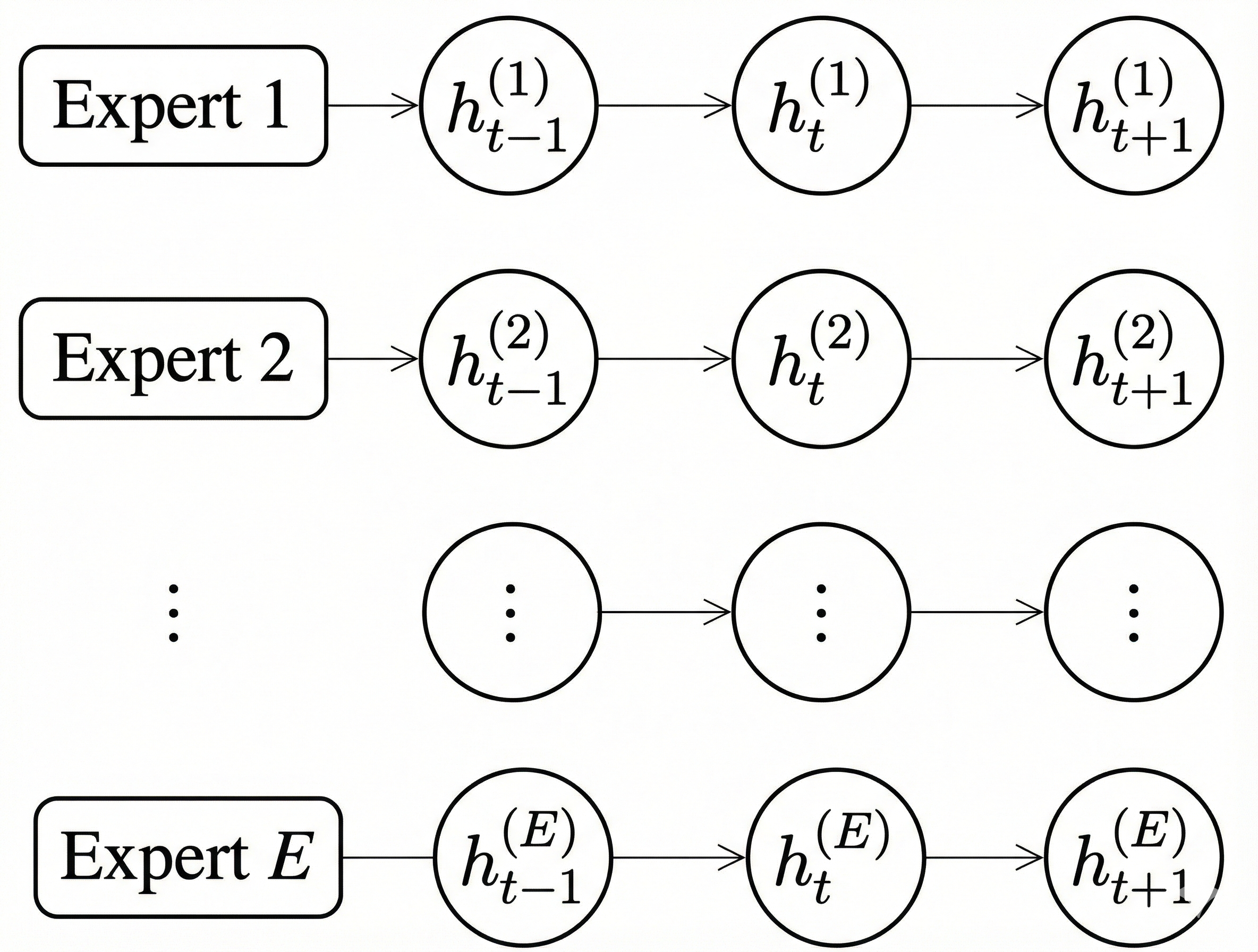}
    \vspace{0.5em}
    \caption{}
    \label{fig:stack_b}
  \end{subfigure}
  
  \caption{Two MoE--SSM designs with different state and compute scaling. (a) MoE-parameterized SSM (parameter-space mixing) keeps a single hidden-state trajectory and mixes expert-produced streams before a single recurrence evaluation. (b) MoE over separated SSMs maintains one state trajectory per expert and combines expert outputs, which requires advancing multiple recurrences resulting computation scales with number of experts.}
  \label{fig:MoE-SSMs}
\end{figure}

\subsection{Theoretical properties}
\label{sec:theory_in_method}

We now summarize the formal statements that motivate the design and defer full proofs to Appendix~\ref{sec:appendix_proofs}.
Throughout, routing changes the streams fed into  SSM, but it does not change the  SSM evaluation itself.

The first question is whether parameter-space mixing breaks the SSM structure required by  SSM.
It does not: after mixing, the layer is still a single selective SSM with state size $N$, which means we can reuse the same  SSM implementation and the same type of structural analysis as in Mamba-2.

\begin{theorem}[Single-SSM structure under MoE routing]
\label{thm:moe_structure}
Fix $T,N,P,E$.
Assume $A\in\mathbb{R}^{N\times N}$ and, for each $t$ and $e$, streams $B_t^{(e)},C_t^{(e)}\in\mathbb{R}^{N\times P}$ and $X_t^{(e)}\in\mathbb{R}^P$ are given.
Let $\pi_t\in\mathbb{R}^E$ satisfy $\pi_{t,e}\ge 0$ and let $\mathcal{K}_t\subseteq[E]$ denote the active set (dense or top-$k$).
Define mixed streams
\[
\widetilde U_t := \sum_{e\in\mathcal{K}_t}\pi_{t,e}\,B_t^{(e)}X_t^{(e)},\qquad
\widetilde C_t := \sum_{e\in\mathcal{K}_t}\pi_{t,e}\,C_t^{(e)}.
\]
Then \eqref{eq:moe_state_method}--\eqref{eq:moe_out_method} is exactly a single selective SSM
\[
h_t = A h_{t-1} + \widetilde U_t,\qquad
Y_t = \widetilde C_t^\top h_t,
\]
with state size $N$, independent of $E$ and independent of sparsity in $\mathcal{K}_t$.
\end{theorem}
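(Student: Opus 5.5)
The argument is a direct identification by substitution. I will check three things: the mixed quantities $\widetilde U_t$ and $\widetilde C_t$ are well-defined objects of the correct shapes; substituting them into \eqref{eq:moe_state_method}--\eqref{eq:moe_out_method} gives exactly the displayed recurrence; and the state carried by that recurrence lives in the same space as in \eqref{eq:ssm_selective_method}.

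\textbf{Shapes.} For each $t$ and each $e\in\mathcal{K}_t$, the product $B_t^{(e)}X_t^{(e)}$ must be read in the channelwise sense of Section~\ref{sec:ssm_ SSM}. That is, its $p$-th column is $b^{(e)}_{t,p}x^{(e)}_{t,p}\in\mathbb{R}^N$, so the term lies in $\mathbb{R}^{N\times P}$, the same space as $h_t$. Since $\mathcal{K}_t$ is a finite (possibly empty) subset of $[E]$, $\widetilde U_t$ is a finite linear combination of elements of $\mathbb{R}^{N\times P}$ and hence lies in $\mathbb{R}^{N\times P}$. Likewise $\widetilde C_t\in\mathbb{R}^{N\times P}$. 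The empty-sum convention gives the zero matrix, so the top-$k$ and dense cases need no separate treatment. Nonnegativity of $\pi_{t,e}$ is not actually needed for this structural claim, and I will remark on that.

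\textbf{Substitution.} The sum in \eqref{eq:moe_state_method} is literally $\widetilde U_t$, so the state update is $h_t=Ah_{t-1}+\widetilde U_t$. For the readout, I pull the sum out using linearity of the transpose and of the channelwise contraction, $(\sum_e \pi_{t,e}C_t^{(e)})^\top h_t=\widetilde C_t^\top h_t$, computed column by column as $\tilde c_{t,p}^\top h_{t,p}$. To match the form of \eqref{eq:ssm_selective_method}, I set $B_t:=\widetilde U_t$ with unit input, or equivalently treat $\widetilde U_t$ as the effective injected term. With $A_t:=A$ and $C_t:=\widetilde C_t$, this is one selective SSM.

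\textbf{State size.} An induction on $t$ from $h_0$ shows that the unique trajectory is $h_t=\sum_{s\le t}A^{t-s}\widetilde U_s\in\mathbb{R}^{N\times P}$. The state dimension is therefore $N$ per channel, and neither $E$ nor $|\mathcal{K}_t|$ appears in the state space; they enter only through the precomputation of the streams.

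\textbf{Main obstacle.} The only delicate point is bookkeeping. The injected term must be matched to the form $B_tX_t$ of \eqref{eq:ssm_selective_method}, since the mixture $\sum_e\pi_{t,e}B_t^{(e)}X_t^{(e)}$ generally does not factor as one $B_t$ times one $X_t$. I will handle this by stating explicitly that the SSM operator accepts the injected stream $U_t=B_tX_t$ as its input. An alternative is to give a factorization: take $X_t$ to be a vector of ones per channel and $B_t$ with columns $\tilde u_{t,p}$, which is valid channelwise.
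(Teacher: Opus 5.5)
Your proposal is correct and uses the same argument as the paper. Both identify the sums in \eqref{eq:moe_state_method}--\eqref{eq:moe_out_method} as $\widetilde U_t$ and $\widetilde C_t$ and read off a single recurrence on $h_t\in\mathbb{R}^{N\times P}$; the paper implicitly takes your first option of feeding $\widetilde U_t$ to the SSM as the injected stream rather than factoring it, and your extra bookkeeping (channelwise shapes, empty active sets, the $X_t=\mathbf{1}$ factorization) is sound, except that the unrolled trajectory should read $h_t=A^t h_0+\sum_{s=1}^{t}A^{t-s}\widetilde U_s$, a harmless slip since the state space $\mathbb{R}^{N\times P}$ is fixed by the recursion itself.
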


This statement is immediate once the recurrence is written in terms of $\widetilde U_t$ and $\widetilde C_t$: MoE changes the effective injection and readout,  so the model stays in the same family as a standard selective SSM layer.

Next, we formalize the computational reason for mixing in parameter space.
The dominant cost in  SSM comes from evolving the state over $T$ steps; our design evolves that state once, whereas the separated alternative evolves it once per expert.

\begin{theorem}[Recurrence complexity does not scale with $E$]
\label{thm:moe_complexity}
Let $\mathcal{C}_{\mathrm{step}}(N,P)$ be the per-step cost of one  SSM/SSM state update with state size $N$ and $P$ channels.
Assume mixing the expert streams at time $t$ costs $\mathcal{C}_{\mathrm{mix}}(k,P)$ when at most $k$ experts are active.
Then the MoE-parameterized  SSM forward pass costs
\[
\mathcal{O}\!\left(T\,\mathcal{C}_{\mathrm{step}}(N,P)+T\,\mathcal{C}_{\mathrm{mix}}(k,P)\right).
\]
In contrast, a separated MoE that advances all $E$ expert states each step costs
\[
\mathcal{O}\!\left(T\,E\,\mathcal{C}_{\mathrm{step}}(N,P)\right).
\]
\end{theorem}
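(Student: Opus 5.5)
The proof is a cost-accounting argument that uses Theorem~\ref{thm:moe_structure} as the structural input. For the MoE-parameterized forward pass, I will split the work at each time step $t$ into a mixing phase and a recurrence phase.

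\emph{Mixing phase.} The layer forms $\widetilde U_t=\sum_{e\in\mathcal{K}_t}\pi_{t,e}B_t^{(e)}X_t^{(e)}$ and $\widetilde C_t=\sum_{e\in\mathcal{K}_t}\pi_{t,e}C_t^{(e)}$. Since $|\mathcal{K}_t|\le k$, this involves at most $k$ weighted terms per stream. By hypothesis its cost is $\mathcal{C}_{\mathrm{mix}}(k,P)$. The expert streams are treated as given inputs, as in the statement of Theorem~\ref{thm:moe_structure}, so producing them is not counted.

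\emph{Recurrence phase.} By Theorem~\ref{thm:moe_structure}, the mixed system is exactly one selective SSM $h_t=Ah_{t-1}+\widetilde U_t$, $Y_t=\widetilde C_t^\top h_t$ with state size $N$. Neither $E$ nor the sparsity pattern enters its dimensions. The $t$-th update therefore costs one step of a standard SSM, $\mathcal{C}_{\mathrm{step}}(N,P)$, and I take this to include the readout.

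Summing both phases over $t=1,\dots,T$ gives $T\,\mathcal{C}_{\mathrm{step}}(N,P)+T\,\mathcal{C}_{\mathrm{mix}}(k,P)$. This is the first bound, with constants absorbed into $\mathcal{O}$.

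For the separated design \eqref{eq:sep_ssm_method}, the model holds $E$ independent states $h_t^{(e)}$. When all of them are advanced, each step performs $E$ state updates $h_t^{(e)}=Ah_{t-1}^{(e)}+B_t^{(e)}X_t^{(e)}$, and each costs $\mathcal{C}_{\mathrm{step}}(N,P)$ because it is a single-SSM step with the same $N$ and $P$. The output combination $\sum_e \pi_{t,e}(C_t^{(e)})^\top h_t^{(e)}$ adds $E$ readouts, which are likewise bounded by a constant times $\mathcal{C}_{\mathrm{step}}$. Summing over $T$ steps gives $\mathcal{O}(T\,E\,\mathcal{C}_{\mathrm{step}}(N,P))$.

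The main point needing care is the cost model, not any calculation. Two conventions must be fixed explicitly.
\begin{itemize}
\item $\mathcal{C}_{\mathrm{step}}$ must be understood to cover both the state update and the readout. This is what lets the readout costs be absorbed in both designs.
\item The separated model must be assumed to advance all $E$ states. If inactive experts were skipped, their states would go stale and the recurrence would not match \eqref{eq:sep_ssm_method}, so the bound is stated for the all-experts case, as the theorem says.
\end{itemize}

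I would also note that the chunked SSD evaluation has the same asymptotic per-token cost as the sequential recurrence. The bound is therefore independent of whether $\mathrm{SSM}$ is computed by scan or by SSD, up to constants.
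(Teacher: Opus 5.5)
Your proposal is correct and follows the paper's proof essentially step for step: the MoE-parameterized pass is split into $T$ single-trajectory state updates costing $\mathcal{C}_{\mathrm{step}}(N,P)$ each plus $T$ mixing steps costing $\mathcal{C}_{\mathrm{mix}}(k,P)$ each, and the separated design is charged $E$ state updates per step. Your explicit conventions make precise what the paper leaves implicit when it calls routing and output mixing lower order: the readout is absorbed into $\mathcal{C}_{\mathrm{step}}$, expert projections and routing are not counted, and all $E$ states are advanced as in \eqref{eq:sep_ssm_method}.
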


The comparison highlights the intended scaling: increasing the number of experts increases parameters, but it does not multiply the expensive part of the computation, because the recurrence is not replicated.
The additional work is confined to routing, expert projections, and the mixing operation.

A practical concern is whether rapid expert switching can destabilize the dynamics.
With a contractive transition matrix $A$, stability reduces to bounding the mixed injection and mixed readout streams.

\begin{theorem}[BIBO stability under a contractive transition]
\label{thm:moe_stability}
Assume $\|A\|\le\rho<1$ in an induced matrix norm.
Assume there exist constants $U,C>0$ such that for all $t$,
$\|\widetilde U_t\|\le U$ and $\|\widetilde C_t\|\le C$.
Then for any $h_0$ and all $t\in[T]$,
\[
\|h_t\|\le \rho^t\|h_0\| + \frac{1-\rho^t}{1-\rho}\,U,
\]
\[
\|Y_t\|\le C\left(\rho^t\|h_0\| + \frac{1-\rho^t}{1-\rho}\,U\right).
\]
\end{theorem}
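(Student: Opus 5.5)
The plan is to use Theorem~\ref{thm:moe_structure}, which reduces the MoE-parameterized layer to the single recurrence $h_t = A h_{t-1} + \widetilde U_t$, $Y_t = \widetilde C_t^\top h_t$. This means routing enters only through the bounded streams $\widetilde U_t$ and $\widetilde C_t$, and the claim becomes a standard discrete Grönwall/geometric-series estimate for a contractive linear system with bounded forcing. Rapid expert switching plays no separate role: all dependence on $\pi_t$ and $\mathcal{K}_t$ is absorbed into the uniform constants $U$ and $C$.

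\textbf{Step 1: one-step inequality for the state.} From the recurrence, the triangle inequality, and submultiplicativity of the induced norm,
\[
\|h_t\| \le \|A\|\,\|h_{t-1}\| + \|\widetilde U_t\| \le \rho\|h_{t-1}\| + U .
\]
Since the state is $h_t\in\mathbb{R}^{N\times P}$ acted on by $A$ from the left, I would take the norm on $h_t$ to be one compatible with the induced norm of $A$. Concretely, this is the induced operator norm of $h_t$ viewed as a map from $\mathbb{R}^P$ to $\mathbb{R}^N$, or equivalently the channelwise vector norm with a maximum over $p$. Then $\|Ah\|\le\|A\|\|h\|$ holds.

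\textbf{Step 2: unroll by induction.} Next I would prove $\|h_t\|\le \rho^t\|h_0\| + U\sum_{s=0}^{t-1}\rho^s$ by induction on $t$. The case $t=0$ is trivial. For the inductive step, substitute the hypothesis into the one-step bound:
\[
\rho\Bigl(\rho^{t-1}\|h_0\| + U\sum_{s=0}^{t-2}\rho^s\Bigr) + U = \rho^t\|h_0\| + U\sum_{s=0}^{t-1}\rho^s .
\]
Summing the geometric series gives $\frac{1-\rho^t}{1-\rho}U$, which is the first claim.

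\textbf{Step 3: output bound.} I would apply $\|Y_t\| = \|\widetilde C_t^\top h_t\| \le \|\widetilde C_t\|\,\|h_t\| \le C\|h_t\|$ and insert Step 2 to get the second claim.

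\textbf{Main obstacle.} The only delicate point is the norm bookkeeping. The norms on $h_t$, $\widetilde U_t$, $\widetilde C_t$, and $Y_t$ must be chosen so that both $\|Ah\|\le\|A\|\|h\|$ and $\|\widetilde C^\top h\|\le\|\widetilde C\|\|h\|$ hold. Moreover, under Mamba-2's channelwise convention the readout is really $y_{t,p}=c_{t,p}^\top h_{t,p}$, not a full matrix product. I would handle this by working per channel $p$: use the vector norm on $h_{t,p}\in\mathbb{R}^N$, bound $|y_{t,p}|\le\|c_{t,p}\|_*\|h_{t,p}\|$ with the dual norm, and interpret $U$ and $C$ as bounds holding uniformly over channels. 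The matrix-level statement then follows by taking maxima over $p$. Everything else is routine.
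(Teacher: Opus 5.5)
Your proposal is correct and is essentially the paper's argument. The paper unrolls to $h_t = A^t h_0 + \sum_{i=1}^{t} A^{t-i}\widetilde U_i$ and bounds each term using $\|A^{t-i}\|\le\rho^{t-i}$, which gives the same geometric-series estimate you get by inducting on the one-step inequality, and then applies the same readout bound $\|Y_t\|\le\|\widetilde C_t\|\,\|h_t\|$. Your channelwise norm bookkeeping (a compatible norm on $h_t$, the dual norm for $c_{t,p}$) is a careful addition; the paper handles this only with its remark that all arguments apply independently per channel.
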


The bound reflects a standard SSM behavior: a stable transition forgets the distant past at a geometric rate.
Routing can affect the state only through the injected term $\widetilde U_t$, so controlling that term is sufficient to prevent state blow-up.

Finally, we clarify how the parameter-mixed model relates to the separated MoE baseline.
The separated model can keep expert-specific memories through distinct trajectories $\{h_t^{(e)}\}$, whereas our model compresses the dynamics into one trajectory.
This compression is exact in a natural commutative regime and otherwise introduces a mismatch that can be related to per-step differences between each expert injection and their mixture.

\begin{theorem}[Separated vs parameter-mixed: equality regime and mismatch bound]
\label{thm:moe_approx}
Assume all experts share $A$ and initialize with $h_0^{(e)}=0$.
Let the separated model follow \eqref{eq:sep_ssm_method} and the mixed model follow \eqref{eq:moe_state_method}--\eqref{eq:moe_out_method}.
If $\pi_{t,e}\equiv \pi_e$ is time-invariant and $C_t^{(e)}\equiv C_t$ for all $e$, then $Y_t^{\mathrm{sep}}=Y_t^{\mathrm{mix}}$ for all $t$.
More generally, if $\|A\|\le\rho<1$ and $\|C_t^{(e)}\|\le C$, then
\[
\|Y_t^{\mathrm{sep}}-Y_t^{\mathrm{mix}}\|
\le
C\sum_{e=1}^E \pi_{t,e}\,\|h_t^{(e)}-h_t\|,
\]
and each difference $h_t^{(e)}-h_t$ satisfies a stable linear recursion driven by
$B_t^{(e)}X_t^{(e)}-\sum_j \pi_{t,j}B_t^{(j)}X_t^{(j)}$.
\end{theorem}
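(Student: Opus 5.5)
The plan is to exploit linearity of the recurrence in its injected input. Throughout, I extend $\pi_{t,e}:=0$ for $e\notin\mathcal{K}_t$, so that every sum over $\mathcal{K}_t$ becomes a sum over $[E]$. I write $U_t^{(e)}:=B_t^{(e)}X_t^{(e)}$ and $\widetilde U_t=\sum_e \pi_{t,e}U_t^{(e)}$, as in Theorem~\ref{thm:moe_structure}.

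\textbf{Equality regime.} Take $\pi_{t,e}\equiv\pi_e$ time-invariant and $C_t^{(e)}\equiv C_t$. Define the averaged separated state $\bar h_t:=\sum_e \pi_e h_t^{(e)}$. Multiplying each separated recursion \eqref{eq:sep_ssm_method} by $\pi_e$ and summing gives
\[
\bar h_t = A\bar h_{t-1} + \sum_e \pi_e U_t^{(e)} = A\bar h_{t-1}+\widetilde U_t .
\]
Here the constancy of $\pi$ is used: it lets the weight commute with the shift $t-1\to t$. We also have $\bar h_0=0=h_0$. Hence $\bar h_t$ and the mixed state $h_t$ satisfy the same recursion with the same initial condition, and induction on $t$ gives $\bar h_t=h_t$. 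The separated output is then $Y_t^{\mathrm{sep}}=\sum_e\pi_e C_t^\top h_t^{(e)}=C_t^\top \bar h_t$. The mixed output is $Y_t^{\mathrm{mix}}=(\sum_e\pi_e C_t)^\top h_t=(\sum_e\pi_e)\,C_t^\top h_t$.

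This step is the main obstacle. The two outputs agree only if $\sum_e\pi_e=1$ over the (fixed) active set. That holds for dense softmax routing by \eqref{eq:router_softmax_method}, but not for un-renormalized top-$k$. I would therefore make the implicit hypothesis explicit in the proof: the active set is time-invariant and the weights on it sum to one. Without that hypothesis, the readout is off by exactly the scalar factor $\sum_e\pi_e$.

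\textbf{Mismatch bound.} For general time-varying routing, I would write the difference of outputs exactly as
\[
Y_t^{\mathrm{sep}}-Y_t^{\mathrm{mix}}=\sum_e \pi_{t,e}(C_t^{(e)})^\top h_t^{(e)}-\sum_e\pi_{t,e}(C_t^{(e)})^\top h_t=\sum_e\pi_{t,e}(C_t^{(e)})^\top\bigl(h_t^{(e)}-h_t\bigr).
\]
This is an identity and needs no normalization of $\pi_t$. The triangle inequality, $\pi_{t,e}\ge0$, and $\|C_t^{(e)}\|\le C$ then give the stated bound. Here I assume the norm of the transpose is controlled by $\|C_t^{(e)}\|$, as for the spectral norm.

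\textbf{Recursion for the differences.} Set $d_t^{(e)}:=h_t^{(e)}-h_t$ and subtract \eqref{eq:moe_state_method} from \eqref{eq:sep_ssm_method}. This gives
\[
d_t^{(e)}=A\,d_{t-1}^{(e)}+\Bigl(U_t^{(e)}-\sum_j\pi_{t,j}U_t^{(j)}\Bigr),\qquad d_0^{(e)}=0,
\]
which is a linear recursion driven by the stated discrepancy term. Stability follows by running the argument of Theorem~\ref{thm:moe_stability} with $h_0=0$ and injection $U^{(e)}_t-\widetilde U_t$. Concretely, unrolling gives $d_t^{(e)}=\sum_{s\le t}A^{t-s}(U_s^{(e)}-\widetilde U_s)$. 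Hence
\[
\|d_t^{(e)}\|\le \frac{1-\rho^t}{1-\rho}\sup_{s\le t}\|U_s^{(e)}-\widetilde U_s\|,
\]
which makes the mismatch quantitative. This also shows the mismatch vanishes when every expert injection coincides with the mixture, consistent with the equality regime.
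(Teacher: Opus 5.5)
Your argument is essentially the paper's own: the weighted average $\bar h_t=\sum_e\pi_e h_t^{(e)}$ satisfies the mixed recursion with zero initial state, the output gap is rewritten as $\sum_e\pi_{t,e}(C_t^{(e)})^\top(h_t^{(e)}-h_t)$ and bounded by the triangle inequality, and subtracting the two recursions gives the driven difference recursion; your unrolled bound on $\|h_t^{(e)}-h_t\|$ is a correct quantitative addition. Your normalization caveat is also correct and more careful than the paper, whose final step $C_t^\top h_t=Y_t^{\mathrm{mix}}$ silently uses $\sum_e\pi_e=1$, which fails under the paper's own un-renormalized top-$k$ convention.
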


This theorem separates two effects.
When routing is fixed and readout is shared, mixing commutes with the linear recurrence and the two models agree exactly.
When routing varies, the separated model can preserve expert-specific information in its hidden states, while the mixed model aggregates those influences into one shared memory; the bound makes clear that the gap is controlled by how different each expert's injected input is from the mixture at each step.

The final question is whether parameter-space mixing increases modeling power or whether it is only an efficiency trick.
Even in a simplified setting where each expert stream is a linear function of $X_t$, the router introduces an input-dependent combination over multiple parameter maps.
This strictly enlarges the function class compared to a single-expert layer, while still requiring one recurrence.

\begin{theorem}[Strict expressivity gain with one recurrence]
\label{thm:moe_expressive}
Consider a one-step specialization ($T=1$) with a single state update and readout.
Assume each expert produces $(B_1^{(e)},C_1^{(e)},X_1^{(e)})$ as linear projections of $X_1$, and routing uses $\pi_1=\mathrm{softmax}(g_1)$ with $g_1$ a linear function of $X_1$.
Then the set of functions representable by the MoE-parameterized layer strictly contains the set representable by any single-expert linear-projection layer, while both execute one state update.
\end{theorem}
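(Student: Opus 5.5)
We prove the statement in two parts: inclusion, then strictness. Throughout, $T=1$ and $h_0=0$, so the layer computes $Y_1=\widetilde C_1^\top \widetilde U_1$. In the per-channel convention this reads
\[
y_{1,p}=\Big(\sum_{e}\pi_{1,e}c^{(e)}_{1,p}\Big)^\top\Big(\sum_{e}\pi_{1,e}b^{(e)}_{1,p}x^{(e)}_{1,p}\Big).
\]
Here every $b^{(e)}_{1,p}$, $c^{(e)}_{1,p}$ and $x^{(e)}_{1,p}$ is a linear function of $X_1$, and $\pi_1=\mathrm{softmax}(W_gX_1)$.

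\textbf{Inclusion.} A single-expert layer computes $y_{1,p}=c_p(X_1)^\top b_p(X_1)\,x_p(X_1)$ with linear maps, so each output coordinate is a homogeneous cubic polynomial in $X_1$. To recover it inside the MoE class, take all $E$ experts identical to the given single expert (or set $W_g=0$ and copy the expert). Because the experts coincide, the mixture collapses to
\[
\Big(\sum_e\pi_{1,e}\Big)^2 c_p^\top b_p x_p .
\]
With dense routing the weights sum to one, so this equals the single-expert output exactly. With top-$k$ routing and no renormalization, the weights on the active set sum to some constant $s\le 1$ (constant once $W_g=0$, since then all logits are equal). We absorb the resulting factor $s^2$ by rescaling, for example by $s^{-2}$, the linear map producing $X_1^{(e)}$. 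Either way every single-expert function is representable, which gives inclusion. This step is routine.

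\textbf{Strictness.} We exhibit one MoE function that is not a polynomial. Take $P=N=1$ and $E=2$. Let $B^{(e)}_1=1$ and $C^{(e)}_1=1$; these are constants, which we can realize by appending a constant-1 coordinate to $X_1$, or else by multiplying through and treating the monomial case below. Take $X^{(1)}_1=x$ and $X^{(2)}_1=0$, with logits $g_1=(x,0)$ under dense routing. Then $Y_1=\sigma(x)\,x$, where $\sigma$ is the logistic function. To stay within strictly linear (not affine) projections, we instead choose $B=x$, $C=x$, $X^{(1)}=x$, which gives $Y_1=x^3\big(\sigma(x)^2+\ldots\big)$. Concretely, with expert 2 zero, $Y_1=\sigma(x)^2x^3$. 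This is real-analytic but not a polynomial: $\sigma(x)^2\to1$ as $x\to\infty$ and $\sigma(x)^2\to0$ as $x\to-\infty$, so $Y_1/x^3$ has two different limits at $\pm\infty$, which no polynomial can do. Every single-expert function is a cubic polynomial, so $\sigma(x)^2x^3$ is not among them, and strict containment follows.

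\textbf{Main obstacle.} The hard step is pinning down ``the set representable by any single-expert linear-projection layer'' precisely enough that the polynomial argument applies. Two details need care: whether the projections may carry biases, and how top-$k$ masking affects the normalization. We fix the single-expert class to be linear (or affine) maps, in which case its outputs are always polynomials of degree at most 3. The non-polynomial witness above then separates the classes in either convention. Finally, we note that both layers perform exactly one update $h_1=A h_0+\widetilde U_1$, as in Theorem~\ref{thm:moe_structure}.
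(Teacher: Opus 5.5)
Your proof is correct and rests on the same idea as the paper's: the softmax router injects a logistic factor $\sigma(x)$, whereas a single-expert layer with $h_0=0$ can only output $c(X_1)^\top b(X_1)\,x(X_1)$, which is a polynomial of degree at most $3$. Your version is slightly more complete than the paper's in two respects: you prove the inclusion direction that ``strictly contains'' requires (the paper only exhibits the separating function), and your witness $\sigma(x)^2x^3$ uses bias-free linear projections, so it avoids the paper's reliance on constant streams or an appended constant feature for its witness $\sigma(x)$.
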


\section{Swimba (Switch Mamba) Architecture}
\label{sec:arch}
\begin{figure}[t]
    \centering
    \includegraphics[width=0.5\textwidth]{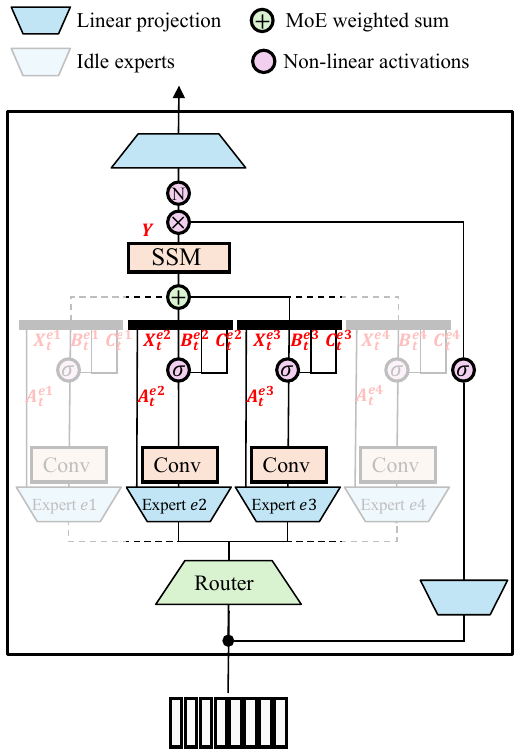}
    \caption{Swimba layer: an MoE-parameterized SSM token mixer built from a Mamba-2 block. Each expert applies its own input projections to produce candidate token-dependent streams (e.g., $B_t^{(e)}, C_t^{(e)}, X_t^{(e)}$). At each layer, only a subset of experts is selected per token. The selected experts are combined by a weighted sum to form a single stream for the SSM computation.}
    \label{fig:arch}
\end{figure}

\subsection{Swimba layer} 
We build the Swimba layer on top of Mamba-2~\citep{dao2024transformers}. Fig.~\ref{fig:arch} shows the Swimba layer architecture. In the original Mamba-2 layer, the in-projection maps each token's hidden state to a higher dimension and slices the result into the parameters $(A,X,B,C)$. To apply an MoE-parameterized SSM, we replace this in-projection with an MoE module: multiple in-projection linear layers serve as experts, and a router selects which expert to activate for each token. The activated experts produce expert-specific parameters, which are aggregated when more than one expert is activated. The aggregated parameters are then fed into the SSM module to perform the SSM computation. The rest of the Mamba-2 layer is kept unchanged.

For implementation, we preserve the Mamba-2 block wrapper and modify only how the  SSM inputs are produced. We follow Mamba-2's block design, which treats the inner mixer as $(A,X,B,C)\mapsto Y$ and produces $(A,X,B,C)$ in parallel at the beginning of the block.

\subsection{Backbone} 
We build a hybrid decoder-only language model using the Nemotron-H-8B~\citep{blakeman2025nemotron} backbone configuration and replacing every Mamba-2 layer with our MoE-parameterized SSM layer. The model has 52 layers and follows the same hybrid layout: four self-attention layers are dispersed through depth, and the remaining layers alternate between Swimba layers and FFN layers. The first layer is a sequence-mixing layer, the last layer is an FFN, and self-attention layers (when present) precede the FFNs they pair with. The only architectural change is the sequence mixer: each original Mamba-2 layer is replaced by a Swimba layer, while the rest of the backbone is kept unchanged.

\subsection{Pre-training}
\label{sec:pretraining}

We initialize from the Nemotron-H-8B checkpoint by copying all shared parameters and then introducing our MoE-parameterized SSM parameters (experts and routers) for each Swimba layer. During continued training, we update only the newly introduced expert and router parameters while keeping the copied backbone parameters fixed.
We perform supervised fine-tuning on the \texttt{Llama Nemotron Post-Training Dataset}~\citep{bercovich2025llama}, released as part of the Llama-Nemotron post-training resources. The dataset spans categories including math, code, science, instruction following, chat, and safety, and reports aggregate counts per category. We use only the SFT portion for teacher-forced training.
We follow the dataset’s standard schema: each example provides an input chat as a list of role-based messages and a target output string.

%% file: src/experiments.tex
\section{Experiments}
\label{sec:experiments}
\begin{table*}[t]
  \centering
  \small
  \setlength{\tabcolsep}{7pt}
  \renewcommand{\arraystretch}{1.15}
  \caption{R sesults on standard benchmarks. Subscripts report standard deviation. Nemotron-H-8B and Swimba-14B are compared at similar FLOPs per token. The last row reports the unweighted average across tasks (and across normalized accuracies where provided). Swimba-14B improves on the baseline on most tasks and on the average score.}
  \label{tab:main_results}
  \begin{tabular}{lcccc}
    \toprule
    & \multicolumn{2}{c}{Nemotron-H-8B} & \multicolumn{2}{c}{Swimba-14B} \\
    \cmidrule(lr){2-3}\cmidrule(lr){4-5}
    Task & Accuracy & Normalized accuracy & Accuracy & Normalized accuracy \\
    \midrule
    Arc-Challenge & $56.5_{\pm 1.4}$ & $60.3_{\pm 1.4}$ & $\mathbf{59.5}_{\pm 1.4}$ & $\mathbf{63.5}_{\pm 1.4}$ \\
    Arc-Easy      & $84.0_{\pm 0.8}$ & $83.7_{\pm 0.8}$ & $\mathbf{84.5}_{\pm 0.8}$ & $\mathbf{84.1}_{\pm 0.8}$ \\
    BoolQ         & $\mathbf{85.5}_{\pm 0.6}$ & \textemdash      & $84.9_{\pm 0.6}$ & \textemdash \\
    Hellaswag     & $61.9_{\pm 0.5}$ & $81.2_{\pm 0.4}$ & $\mathbf{64.2}_{\pm 0.5}$ & $\mathbf{84.9}_{\pm 0.4}$ \\
    MMLU          & $71.7_{\pm 0.4}$ & \textemdash      & $\mathbf{75.0}_{\pm 0.4}$ & \textemdash \\
    OpenbookQA    & $\mathbf{35.4}_{\pm 2.1}$ & $\mathbf{47.4}_{\pm 2.2}$ & $34.9_{\pm 2.1}$ & $46.7_{\pm 2.2}$ \\
    PIQA          & $81.7_{\pm 0.9}$ & $82.3_{\pm 0.9}$ & $\mathbf{82.0}_{\pm 0.9}$ & $\mathbf{82.5}_{\pm 0.9}$ \\
    RTE           & $71.8_{\pm 2.7}$ & \textemdash      & $\mathbf{73.2}_{\pm 2.7}$ & \textemdash \\
    WinoGrande    & $76.7_{\pm 1.2}$ & \textemdash      & $\mathbf{79.1}_{\pm 1.2}$ & \textemdash \\
    \midrule
    Average & $69.47$ & $70.98$ & $\mathbf{70.81}$ & $\mathbf{72.34}$\\
    \bottomrule
  \end{tabular}%
\end{table*}

\begin{table}[t]
  \centering
  \small
  \setlength{\tabcolsep}{10pt}
  \renewcommand{\arraystretch}{1.15}
  \caption{Per-token inference FLOPs. Swimba-14B has essentially the same FLOPs per token as Nemotron-H-8B, indicating that using a single selected expert per Swimba layer does not materially change the dominant inference compute.}
  \label{tab:per_token_flops}
  \begin{tabular}{lc}
    \toprule
    Model & FLOPs / token \\
    \midrule
    Nemotron-H-8B & $1.51263\times 10^{10}$ \\
    Swimba-14B    & $1.51282\times 10^{10}$ \\
    \bottomrule
  \end{tabular}
\end{table}

We empirically evaluate the effectiveness and efficiency of Swimba and report the results in this section.

\textbf{Setup.} We evaluate Swimba-14B against the Nemotron-H-8B baseline. Swimba-14B is configured with 4 experts per Swimba layer, and each token activates a single expert. We use LM-Evaluation-Harness~\cite{eval-harness} to evaluate model performance across several datasets. We measure inference efficiency using vLLM~\cite{kwon2023efficient} with the Transformers backend engine on a single 40GB A100 GPU, across batch sizes and sequence lengths.

\textbf{Datasets.} We evaluate on standard benchmarks that cover general language understanding, reasoning, and knowledge recall. \textit{BoolQ} contains naturally occurring yes/no questions~\cite{clark-etal-2019-boolq}. \textit{OpenBookQA} assesses elementary science question answering~\cite{mihaylov-etal-2018-suit}, and \textit{RTE} evaluates textual entailment~\cite{dagan2005pascal}. To measure broad world knowledge, we use \textit{MMLU}, which spans 57 subjects across STEM and the humanities~\cite{hendrycks2021measuring}. We also evaluate commonsense reasoning using \textit{PIQA} for physical interactions~\cite{bisk2020piqa}, \textit{WinoGrande} for adversarial Winograd-style problems~\cite{sakaguchi2021winogrande}, and \textit{HellaSwag} for commonsense inference~\cite{zellers-etal-2019-hellaswag}. Finally, we include \textit{ARC-Challenge} and \textit{ARC-Easy} to test grade-school science question answering~\cite{clark2018think}.

\textbf{Metrics.} We report both efficiency and performance metrics. For efficiency, we measure \textit{throughput}, tokens processed or generated per second; \textit{FLOPs}, floating-point operations per inference step; and \textit{latency}, time from request to the start of generation. For performance with LM-Evaluation-Harness, we report \textit{accuracy}, the fraction of examples where the correct choice has the highest total log-likelihood; and \textit{normalized accuracy}, the fraction of examples where the correct choice has the highest length-normalized log-likelihood, computed by dividing each choice log-likelihood by its answer length in bytes

\subsection{Model Evaluation}
\label{sec:eval}

We report benchmark results in Table~\ref{tab:main_results}. Swimba-14B activates only one expert out of four, so its FLOPs are nearly the same as the Nemotron-H-8B baseline; we quantify this in the next section. Table~\ref{tab:main_results} shows that Swimba-14B outperforms Nemotron-H-8B on most tasks and achieves comparable performance on the remaining tasks. Swimba-14B also improves the average score across all tasks. Overall, these results indicate that Swimba-14B improves performance while keeping FLOPs nearly unchanged.

\subsection{Inference Efficiency}
\label{sec:efficiency}
We evaluate inference efficiency from two complementary angles. First, we analyze the theoretical compute cost of Swimba-14B and Nemotron-H-8B. Second, we benchmark end-to-end serving performance using vLLM, a production-style inference engine, to capture practical latency and throughput.

\textbf{FLOPs analysis.}
We use the PyTorch profiler~\cite{paszke2019pytorch} to estimate inference FLOPs. Specifically, we enable FLOP counting in the profiler during a forward pass; PyTorch reports per-operator FLOP estimates, which we then sum over all recorded operators to obtain the total forward-pass FLOPs. Table~\ref{tab:per_token_flops} shows that Swimba-14B and Nemotron-H-8B differ by less than $0.2\%$ in FLOPs.

\begin{figure*}[t]
\centering
\includegraphics[width=\linewidth]{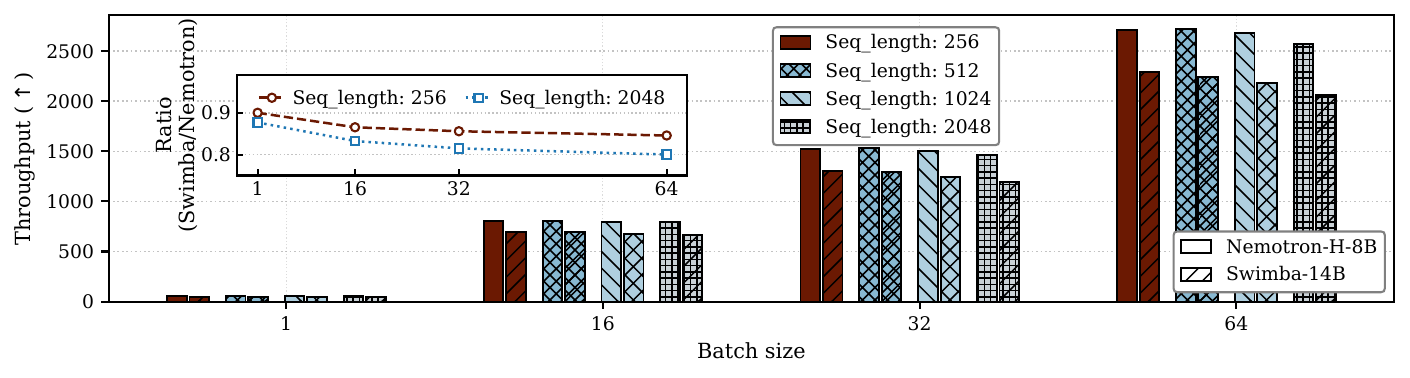}
\caption{Decoding throughput on vLLM versus batch size for multiple input-output sequence lengths, with the inset reporting the throughput ratio (Swimba/Nemotron). Swimba-14B remains close to the Nemotron-H-8B baseline in throughput with a slight drop.}
\label{fig:throughput}
\end{figure*}

\begin{figure*}[t]
\centering
\includegraphics[width=\linewidth]{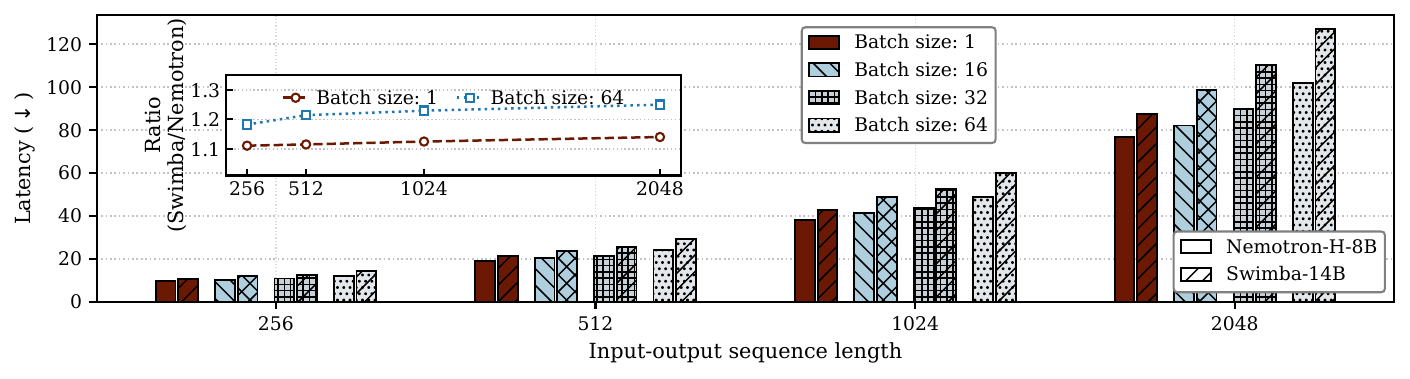}
\caption{Latency on vLLM versus input-output sequence length for multiple batch sizes, with the inset reporting the ratio (Swimba/Nemotron). Latency trends are similar across sequence lengths, demonstrating slightly worse end-to-end cost between Swimba-14B and Nemotron-H-8B.}
\label{fig:latency}
\end{figure*}

\textbf{\texttt{vLLM} throughput and latency.}
To measure practical inference efficiency, we benchmark decoding with \texttt{vLLM}~\cite{kwon2023efficient} under identical settings for the baseline and our model. Fig.~\ref{fig:throughput} reports end-to-end throughput and Fig.~\ref{fig:latency} reports latency. We sweep input-output sequence lengths and batch sizes. Across these settings, Swimba-14B has slightly lower throughput and higher latency than Nemotron-H-8B, with as low as a $10\%$ slowdown.

Despite having nearly the same FLOPs, Swimba-14B shows a larger runtime gap than FLOPs. We attribute this to routing overhead~\cite{go2025moetuner}, which can be relatively time-consuming. However, as we increase the number of experts, the parameter count grows while the throughput and latency remains largely unchanged as long as the number of activated experts is fixed~\cite{chitty2025moe}.

%% file: src/related_work.tex
\section{Related Work}
\label{sec:related_work}
\paragraph{Efficient Sequence Modeling (SSMs and RNNs).}
Structured State Space Models (SSMs) originated from the HiPPO theory~\cite{gu2020hippo} and S4~\cite{gu2021efficiently}, offering efficient alternatives to transformers~\cite{vaswani2017attention} for long sequences. This lineage evolved through {H3}~\cite{fu2022hungry} and {Hyena}~\cite{poli2023hyena}, which replaced attention with long convolutions. {Mamba}~\cite{gu2023mamba} established the efficacy of input-dependent selection, while {Mamba-2}~\cite{dao2024transformers} reformulated the architecture under the SSD framework to leverage matrix multiplication accelerators. Parallel to SSMs, modern Recurrent Neural Networks (RNNs) such as {RWKV}~\cite{peng2023rwkv}, {RetNet}~\cite{sun2023retentive}, and {Griffin/Hawk}~\cite{de2024griffin} have also achieved transformer-competitive performance by linearizing attention or gating recurrence. Our method builds strictly upon the Mamba-2 SSD specification.

\paragraph{Mixture-of-Experts (MoE) in Sequence Models.}
Sparse MoE is a dominant strategy for scaling model capacity while maintaining constant inference costs~\cite{shazeer2017outrageously}. The field has progressed from early implementations to massive scale-out with {GShard}~\cite{lepikhin2020gshard} and {Switch Transformer}~\cite{fedus2022switch} (top-1 routing). Later advances focus on routing stability and expert utilization, including {Expert Choice}~\cite{zhou2022mixture}, {Soft MoE}~\cite{puigcerver2023sparse}, and open-weights models like {Mixtral}~\cite{jiang2024mixtral}. While most approaches apply sparsity solely to the MLP (Feed-Forward) blocks, {SwitchHead}~\cite{csordas2024switchhead} and MoA (Mixture-of-Attention-Head)~\cite{zhang2022mixture} demonstrated the utility of routing within the token-mixing layers, a concept our method adapts to the SSM context.

\paragraph{MoE-SSM Hybrids.}
Combining MoE with linear-time architectures is an emerging frontier. The prevailing strategy is \textit{block-mixing}, where standard SSM layers are interleaved with MoE-MLP layers. {BlackMamba}~\cite{anthony2024blackmamba} and {MoE-Mamba}~\cite{pioro2024moe} pioneered this by replacing Mamba's internal MLP or interleaving MoE blocks. {Jamba}~\cite{lieber2024jamba} and {Zamba}~\cite{glorioso2024zamba} extended this by mixing Mamba, Attention, and MoE-MLP layers into hybrid macro-architectures. Most notably, the NVIDIA Nemotron-3 series~\cite{blakeman2025nvidia} and Nemotron-H family~\cite{blakeman2025nemotron} scale this to production levels, employing a hybrid architecture that interleaves Mamba-2 token mixers with MoE Feed-Forward Networks (FFNs). In these designs, the SSM dynamics remains dense; sparsity is extrinsic to the state transition.  {Routing Mamba (RoM)}~\cite{zhan2025routing} shifts toward \textit{inner mixing} by applying MoE to the linear projections surrounding the SSM core. However, these methods are primarily motivated by engineering considerations and do not distinguish between the two types of MoE--SSM. In contrast, our {Swimba} method performs \textit{parameter-space mixing}: we route experts to generate the specific $\mathbf{B}_t$ and $\mathbf{C}_t$ that govern the state transition itself. By sharing the transition matrix $\mathbf{A}$ and aggregating parameters rather than outputs, we maintain a single coherent state trajectory.

%% file: src/conclusion.tex
\section{Conclusion}
We incorporate mixture-of-experts (MoE) specialization into SSM, while avoiding expert-dependent recurrence cost. We first formalize two MoE–SSM families, MoE of separated SSMs and MoE-parameterized SSM, and provide theoretical characterizations of these designs. We then propose Switch Mamba (Swimba), which routes over expert-produced SSM streams but mixes them in parameter space to retain a single hidden-state trajectory. Experiments evaluate a Swimba-14B model on standard benchmarks, and show improved average performance at essentially unchanged FLOPs per token. vLLM decoding measurements indicate slightly lower throughput and higher latency, consistent with routing overhead. Overall, these results support the effectiveness of Swimba under matched-compute constraints and suggest a practical path to scaling SSM capacity via MoE.

%% file: src/appendix.tex
% \section{Appendix}
\section{Proofs for Section~\ref{sec:theory_in_method}}
\label{sec:appendix_proofs}

This appendix provides proofs for Theorems~\ref{thm:moe_structure}--\ref{thm:moe_expressive}.
Unless stated otherwise, we work with the mixed model
\begin{equation}
h_t = A h_{t-1} + \widetilde U_t,
\qquad
Y_t = \widetilde C_t^\top h_t,
\label{eq:app_mixed_ssm}
\end{equation}
where
\begin{equation}
\widetilde U_t := \sum_{e\in\mathcal{K}_t}\pi_{t,e}\,B_t^{(e)}X_t^{(e)},
\qquad
\widetilde C_t := \sum_{e\in\mathcal{K}_t}\pi_{t,e}\,C_t^{(e)}.
\label{eq:app_mixed_streams}
\end{equation}
For the separated model we use
\begin{equation}
h_t^{(e)} = A h_{t-1}^{(e)} + B_t^{(e)}X_t^{(e)},
\qquad
Y_t^{\mathrm{sep}} = \sum_{e\in\mathcal{K}_t}\pi_{t,e}\,(C_t^{(e)})^\top h_t^{(e)}.
\label{eq:app_sep_ssm}
\end{equation}
All statements are channelwise-compatible; one may read $h_t\in\mathbb{R}^{N\times P}$ and apply the same arguments independently to each channel, or equivalently treat a fixed channel $p$ and drop the subscript $p$.

\subsection{Proof of Theorem~\ref{thm:moe_structure}}
\begin{proof}
Fix $T,N,P,E$ and suppose $A$, $\{B_t^{(e)},C_t^{(e)},X_t^{(e)}\}$, $\{\pi_t\}$, and $\{\mathcal{K}_t\}$ are given as in the theorem.
Define $\widetilde U_t$ and $\widetilde C_t$ by \eqref{eq:app_mixed_streams}.
Then the proposed mixed update \eqref{eq:moe_state_method} can be rewritten as
\[
h_t = A h_{t-1} + \widetilde U_t,
\]
and the mixed readout \eqref{eq:moe_out_method} can be rewritten as
\[
Y_t = \widetilde C_t^\top h_t.
\]
This is exactly the defining form of a (time-varying) selective SSM with state size $N$ driven by the input stream $\widetilde U_t$ and read out by $\widetilde C_t$.
The state dimension is the dimension of $h_t$, which is $N$ by construction, and this does not depend on $E$ or on the cardinality of $\mathcal{K}_t$.
\end{proof}

\subsection{Proof of Theorem~\ref{thm:moe_complexity}}
\begin{proof}
The forward computation decomposes into two parts.

First, the recurrence in \eqref{eq:moe_state_method} (or equivalently \eqref{eq:app_mixed_ssm}) advances a single state trajectory for $T$ steps.
By definition, each step costs $\mathcal{C}_{\mathrm{step}}(N,P)$, hence the total cost of state evolution is $T\,\mathcal{C}_{\mathrm{step}}(N,P)$.

Second, at each time $t$, the mixed injection term and mixed readout require forming the sums in \eqref{eq:moe_state_method}--\eqref{eq:moe_out_method}.
When at most $k$ experts are active, these sums cost $\mathcal{C}_{\mathrm{mix}}(k,P)$ by assumption, and therefore the total mixing cost is $T\,\mathcal{C}_{\mathrm{mix}}(k,P)$.

Combining the two contributions yields
\[
\mathcal{O}\!\left(T\,\mathcal{C}_{\mathrm{step}}(N,P)+T\,\mathcal{C}_{\mathrm{mix}}(k,P)\right).
\]

For the separated design, one advances $E$ state trajectories per step, each with cost $\mathcal{C}_{\mathrm{step}}(N,P)$, giving a total recurrence cost $T\,E\,\mathcal{C}_{\mathrm{step}}(N,P)$.
The remaining costs (routing and output mixing) are lower order relative to the repeated recurrence and do not change the stated scaling.
\end{proof}

\subsection{Proof of Theorem~\ref{thm:moe_stability}}
\begin{proof}
Assume $\|A\|\le\rho<1$ and $\|\widetilde U_t\|\le U$ for all $t$.
Unrolling the recursion \eqref{eq:app_mixed_ssm} gives
\[
h_t = A^t h_0 + \sum_{i=1}^{t} A^{t-i}\widetilde U_i.
\]
Taking norms and using submultiplicativity yields
\[
\|h_t\|
\le \|A^t\|\,\|h_0\| + \sum_{i=1}^{t} \|A^{t-i}\|\,\|\widetilde U_i\|
\le \rho^t\|h_0\| + U\sum_{i=1}^{t}\rho^{t-i}.
\]
The geometric series satisfies $\sum_{i=1}^{t}\rho^{t-i}=\sum_{j=0}^{t-1}\rho^{j}=(1-\rho^{t})/(1-\rho)$, hence
\[
\|h_t\|\le \rho^t\|h_0\| + \frac{1-\rho^t}{1-\rho}\,U.
\]
For the output, we have $Y_t=\widetilde C_t^\top h_t$ and thus
\[
\|Y_t\|\le \|\widetilde C_t\|\,\|h_t\|\le C\|h_t\|.
\]
Substituting the bound on $\|h_t\|$ completes the proof.
\end{proof}

\subsection{Proof of Theorem~\ref{thm:moe_approx}}
\begin{proof}
We first prove the equality regime.
Assume $\pi_{t,e}\equiv \pi_e$ is time-invariant and $C_t^{(e)}\equiv C_t$ for all $e$.
Define the weighted average state
\[
\bar h_t := \sum_{e=1}^{E}\pi_e\,h_t^{(e)}.
\]
Using the separated recurrence \eqref{eq:app_sep_ssm} and linearity,
\[
\bar h_t
= \sum_{e=1}^{E}\pi_e\left(A h_{t-1}^{(e)} + B_t^{(e)}X_t^{(e)}\right)
= A\sum_{e=1}^{E}\pi_e h_{t-1}^{(e)} + \sum_{e=1}^{E}\pi_e B_t^{(e)}X_t^{(e)}
= A\bar h_{t-1} + \widetilde U_t,
\]
where $\widetilde U_t=\sum_e \pi_e B_t^{(e)}X_t^{(e)}$ matches \eqref{eq:app_mixed_streams} with $\mathcal{K}_t=[E]$ (or with any $\mathcal{K}_t$ that contains exactly those $e$ for which $\pi_e\neq 0$).
Since $h_0^{(e)}=0$ for all $e$, we have $\bar h_0=0$.
The mixed model starts from $h_0=0$ as well, so by uniqueness of solutions to the linear recursion, $h_t=\bar h_t$ for all $t$.
For the outputs,
\[
Y_t^{\mathrm{sep}}
= \sum_{e=1}^{E}\pi_e\,C_t^\top h_t^{(e)}
= C_t^\top\sum_{e=1}^{E}\pi_e h_t^{(e)}
= C_t^\top \bar h_t
= C_t^\top h_t
= Y_t^{\mathrm{mix}},
\]
which proves equality.

We next prove the stated mismatch bound.
Assume $\|C_t^{(e)}\|\le C$ for all $t,e$.
Write the difference as
\begin{align*}
Y_t^{\mathrm{sep}}-Y_t^{\mathrm{mix}}
&= \sum_{e\in\mathcal{K}_t}\pi_{t,e}\,(C_t^{(e)})^\top h_t^{(e)}
- \left(\sum_{e\in\mathcal{K}_t}\pi_{t,e}C_t^{(e)}\right)^\top h_t \\
&= \sum_{e\in\mathcal{K}_t}\pi_{t,e}\,(C_t^{(e)})^\top\left(h_t^{(e)}-h_t\right).
\end{align*}
Taking norms and applying the triangle inequality yields
\[
\|Y_t^{\mathrm{sep}}-Y_t^{\mathrm{mix}}\|
\le \sum_{e\in\mathcal{K}_t}\pi_{t,e}\,\|C_t^{(e)}\|\,\|h_t^{(e)}-h_t\|
\le C\sum_{e\in\mathcal{K}_t}\pi_{t,e}\,\|h_t^{(e)}-h_t\|.
\]
Extending the sum over all experts (with $\pi_{t,e}=0$ outside $\mathcal{K}_t$) gives the bound as stated.

Finally, define $\delta_t^{(e)}:=h_t^{(e)}-h_t$.
Subtracting \eqref{eq:app_mixed_ssm} from \eqref{eq:app_sep_ssm} gives
\[
\delta_t^{(e)}
= A\delta_{t-1}^{(e)} + \left(B_t^{(e)}X_t^{(e)}-\widetilde U_t\right),
\]
which is a stable linear recursion when $\|A\|\le\rho<1$.
This identifies the driving term as the deviation of each expert injection from the mixture injection.
\end{proof}

\subsection{Proof of Theorem~\ref{thm:moe_expressive}}
\begin{proof}
We give a concrete one-dimensional construction and then show that it cannot be represented by any single-expert linear-projection model under the stated specialization.

Consider the setting $T=1$, $N=P=1$, and $h_0=0$.
Let $A=0$.
Let there be $E=2$ experts and take $\mathcal{K}_1=\{1,2\}$.
Define the expert streams so that the injected term is constant within each expert:
\[
B_1^{(1)}X_1^{(1)} \equiv 0,
\qquad
B_1^{(2)}X_1^{(2)} \equiv 1,
\qquad
C_1^{(1)}\equiv C_1^{(2)}\equiv 1.
\]
This choice is compatible with the assumption that these quantities are produced by linear projections of $X_1$ by simply taking projections that ignore $X_1$ and output constants (equivalently, linear maps applied to an augmented feature vector with a fixed bias coordinate; if biases are disallowed, the same construction can be obtained by restricting to inputs with an appended constant feature).
With this choice, the mixed model \eqref{eq:moe_state_method}--\eqref{eq:moe_out_method} becomes
\[
h_1 = \pi_{1,2},
\qquad
Y_1 = h_1 = \pi_{1,2}.
\]
Now choose the router logits as $g_{1,1}=0$ and $g_{1,2}=x$, where $x\in\mathbb{R}$ denotes the scalar input feature (that is, $X_1=x$) and $g_{1,2}$ is a linear function of $X_1$.
Then
\[
\pi_{1,2} = \frac{e^{x}}{1+e^{x}} =: \sigma(x),
\]
so the MoE-parameterized layer represents the logistic sigmoid function $Y_1=\sigma(x)$.

We now argue that no single-expert linear-projection layer in this one-step specialization can represent $\sigma(x)$.
Under the same specialization ($T=1$, $h_0=0$, $A=0$, $N=P=1$) and with one expert, the layer output takes the form
\[
Y_1 = C_1(X_1)\,B_1(X_1)\,X_1'(X_1),
\]
where $B_1(\cdot)$, $C_1(\cdot)$, and $X_1'(\cdot)$ are linear functions of $X_1$ by assumption.
Therefore $Y_1$ is a polynomial in $x$ of degree at most $3$.
In particular, $Y_1$ is an entire function whose third derivative is constant and whose fourth derivative is identically zero.

In contrast, $\sigma(x)$ is not a polynomial: its derivative satisfies $\sigma'(x)=\sigma(x)(1-\sigma(x))$, which is nonzero for all $x\in\mathbb{R}$ and decays to $0$ only as $x\to\pm\infty$.
A nonconstant polynomial cannot have this behavior; more directly, any nonzero polynomial diverges in magnitude as $|x|\to\infty$, while $\sigma(x)$ remains bounded in $(0,1)$.
Hence $\sigma(x)$ cannot equal any polynomial on $\mathbb{R}$, and therefore cannot be represented by the single-expert linear-projection model in this specialization.

This exhibits a function realizable by the two-expert MoE-parameterized layer but not by any single-expert linear-projection layer, while both execute one state update.
\end{proof}

%% file: reference.bib
@string{AAAI = "National Conference on Artificial Intelligence (AAAI)"}

@inproceedings{gu2023mamba,
  title={Mamba: Linear-time sequence modeling with selective state spaces},
  author={Gu, Albert and Dao, Tri},
  booktitle={First conference on language modeling},
  year={2024}
}

@article{dao2024transformers,
  title={Transformers are ssms: Generalized models and efficient algorithms through structured state space duality},
  author={Dao, Tri and Gu, Albert},
  journal={arXiv preprint arXiv:2405.21060},
  year={2024}
}

@article{gu2021efficiently,
  title={Efficiently modeling long sequences with structured state spaces},
  author={Gu, Albert and Goel, Karan and R{\'e}, Christopher},
  journal={arXiv preprint arXiv:2111.00396},
  year={2021}
}

@article{shazeer2017outrageously,
  title={Outrageously large neural networks: The sparsely-gated mixture-of-experts layer},
  author={Shazeer, Noam and Mirhoseini, Azalia and Maziarz, Krzysztof and Davis, Andy and Le, Quoc and Hinton, Geoffrey and Dean, Jeff},
  journal={arXiv preprint arXiv:1701.06538},
  year={2017}
}

@article{fedus2022switch,
  title={Switch transformers: Scaling to trillion parameter models with simple and efficient sparsity},
  author={Fedus, William and Zoph, Barret and Shazeer, Noam},
  journal={Journal of Machine Learning Research},
  volume={23},
  number={120},
  pages={1--39},
  year={2022}
}

@article{anthony2024blackmamba,
  title={Blackmamba: Mixture of experts for state-space models},
  author={Anthony, Quentin and Tokpanov, Yury and Glorioso, Paolo and Millidge, Beren},
  journal={arXiv preprint arXiv:2402.01771},
  year={2024}
}

@article{pioro2024moe,
  title={Moe-mamba: Efficient selective state space models with mixture of experts},
  author={Pi{\'o}ro, Maciej and Ciebiera, Kamil and Kr{\'o}l, Krystian and Ludziejewski, Jan and Krutul, Micha{\l} and Krajewski, Jakub and Antoniak, Szymon and Mi{\l}o{\'s}, Piotr and Cygan, Marek and Jaszczur, Sebastian},
  journal={arXiv preprint arXiv:2401.04081},
  year={2024}
}

@article{lieber2024jamba,
  title={Jamba: A hybrid transformer-mamba language model},
  author={Lieber, Opher and Lenz, Barak and Bata, Hofit and Cohen, Gal and Osin, Jhonathan and Dalmedigos, Itay and Safahi, Erez and Meirom, Shaked and Belinkov, Yonatan and Shalev-Shwartz, Shai and others},
  journal={arXiv preprint arXiv:2403.19887},
  year={2024}
}

@article{zhan2025routing,
  title={Routing Mamba: Scaling State Space Models with Mixture-of-Experts Projection},
  author={Zhan, Zheng and Ren, Liliang and Wang, Shuohang and Liu, Liyuan and Liu, Yang and Gong, Yeyun and Wang, Yanzhi and Shen, Yelong},
  journal={arXiv preprint arXiv:2506.18145},
  year={2025}
}

@article{fu2022hungry,
  title={Hungry hungry hippos: Towards language modeling with state space models},
  author={Fu, Daniel Y and Dao, Tri and Saab, Khaled K and Thomas, Armin W and Rudra, Atri and R{\'e}, Christopher},
  journal={arXiv preprint arXiv:2212.14052},
  year={2022}
}

@article{lepikhin2020gshard,
  title={Gshard: Scaling giant models with conditional computation and automatic sharding},
  author={Lepikhin, Dmitry and Lee, HyoukJoong and Xu, Yuanzhong and Chen, Dehao and Firat, Orhan and Huang, Yanping and Krikun, Maxim and Shazeer, Noam and Chen, Zhifeng},
  journal={arXiv preprint arXiv:2006.16668},
  year={2020}
}

@article{ghahramani2000variational,
  title={Variational learning for switching state-space models},
  author={Ghahramani, Zoubin and Hinton, Geoffrey E},
  journal={Neural computation},
  volume={12},
  number={4},
  pages={831--864},
  year={2000},
  publisher={MIT Press}
}

@article{gu2020hippo,
  title={Hippo: Recurrent memory with optimal polynomial projections},
  author={Gu, Albert and Dao, Tri and Ermon, Stefano and Rudra, Atri and R{\'e}, Christopher},
  journal={Advances in neural information processing systems},
  volume={33},
  pages={1474--1487},
  year={2020}
}

@inproceedings{poli2023hyena,
  title={Hyena hierarchy: Towards larger convolutional language models},
  author={Poli, Michael and Massaroli, Stefano and Nguyen, Eric and Fu, Daniel Y and Dao, Tri and Baccus, Stephen and Bengio, Yoshua and Ermon, Stefano and R{\'e}, Christopher},
  booktitle={International Conference on Machine Learning},
  pages={28043--28078},
  year={2023},
  organization={PMLR}
}

@article{peng2023rwkv,
  title={Rwkv: Reinventing rnns for the transformer era},
  author={Peng, Bo and Alcaide, Eric and Anthony, Quentin and Albalak, Alon and Arcadinho, Samuel and Biderman, Stella and Cao, Huanqi and Cheng, Xin and Chung, Michael and Grella, Matteo and others},
  journal={arXiv preprint arXiv:2305.13048},
  year={2023}
}

@article{sun2023retentive,
  title={Retentive network: A successor to transformer for large language models},
  author={Sun, Yutao and Dong, Li and Huang, Shaohan and Ma, Shuming and Xia, Yuqing and Xue, Jilong and Wang, Jianyong and Wei, Furu},
  journal={arXiv preprint arXiv:2307.08621},
  year={2023}
}

@article{de2024griffin,
  title={Griffin: Mixing gated linear recurrences with local attention for efficient language models},
  author={De, Soham and Smith, Samuel L and Fernando, Anushan and Botev, Aleksandar and Cristian-Muraru, George and Gu, Albert and Haroun, Ruba and Berrada, Leonard and Chen, Yutian and Srinivasan, Srivatsan and others},
  journal={arXiv preprint arXiv:2402.19427},
  year={2024}
}

@article{jiang2024mixtral,
  title={Mixtral of experts},
  author={Jiang, Albert Q and Sablayrolles, Alexandre and Roux, Antoine and Mensch, Arthur and Savary, Blanche and Bamford, Chris and Chaplot, Devendra Singh and Casas, Diego de las and Hanna, Emma Bou and Bressand, Florian and others},
  journal={arXiv preprint arXiv:2401.04088},
  year={2024}
}

@article{zhou2022mixture,
  title={Mixture-of-experts with expert choice routing},
  author={Zhou, Yanqi and Lei, Tao and Liu, Hanxiao and Du, Nan and Huang, Yanping and Zhao, Vincent and Dai, Andrew M and Le, Quoc V and Laudon, James and others},
  journal={Advances in Neural Information Processing Systems},
  volume={35},
  pages={7103--7114},
  year={2022}
}

@article{puigcerver2023sparse,
  title={From sparse to soft mixtures of experts},
  author={Puigcerver, Joan and Riquelme, Carlos and Mustafa, Basil and Houlsby, Neil},
  journal={arXiv preprint arXiv:2308.00951},
  year={2023}
}

@article{csordas2024switchhead,
  title={Switchhead: Accelerating transformers with mixture-of-experts attention},
  author={Csord{\'a}s, R{\'o}bert and Pikekos, Piotr and Irie, Kazuki and Schmidhuber, J{\"u}rgen},
  journal={Advances in Neural Information Processing Systems},
  volume={37},
  pages={74411--74438},
  year={2024}
}

@article{glorioso2024zamba,
  title={Zamba: A compact 7b ssm hybrid model},
  author={Glorioso, Paolo and Anthony, Quentin and Tokpanov, Yury and Whittington, James and Pilault, Jonathan and Ibrahim, Adam and Millidge, Beren},
  journal={arXiv preprint arXiv:2405.16712},
  year={2024}
}

@article{blakeman2025nvidia,
  title={NVIDIA Nemotron 3: Efficient and Open Intelligence},
  author={Blakeman, Aaron and Grattafiori, Aaron and Basant, Aarti and Gupta, Abhibha and Khattar, Abhinav and Renduchintala, Adi and Vavre, Aditya and Shukla, Akanksha and Bercovich, Akhiad and Ficek, Aleksander and others},
  journal={arXiv preprint arXiv:2512.20856},
  year={2025}
}

@article{blakeman2025nemotron,
  title={Nemotron-h: A family of accurate and efficient hybrid mamba-transformer models},
  author={Blakeman, Aaron and Basant, Aarti and Khattar, Abhinav and Renduchintala, Adithya and Bercovich, Akhiad and Ficek, Aleksander and Bjorlin, Alexis and Taghibakhshi, Ali and Deshmukh, Amala Sanjay and Mahabaleshwarkar, Ameya Sunil and others},
  journal={arXiv preprint arXiv:2504.03624},
  year={2025}
}

@article{bercovich2025llama,
  title={Llama-nemotron: Efficient reasoning models},
  author={Bercovich, Akhiad and Levy, Itay and Golan, Izik and Dabbah, Mohammad and El-Yaniv, Ran and Puny, Omri and Galil, Ido and Moshe, Zach and Ronen, Tomer and Nabwani, Najeeb and others},
  journal={arXiv preprint arXiv:2505.00949},
  year={2025}
}

@inproceedings{clark-etal-2019-boolq,
    title = "{B}ool{Q}: Exploring the Surprising Difficulty of Natural Yes/No Questions",
    author = "Clark, Christopher  and
      Lee, Kenton  and
      Chang, Ming-Wei  and
      Kwiatkowski, Tom  and
      Collins, Michael  and
      Toutanova, Kristina",
    editor = "Burstein, Jill  and
      Doran, Christy  and
      Solorio, Thamar",
    booktitle = "Proceedings of the 2019 Conference of the North {A}merican Chapter of the Association for Computational Linguistics: Human Language Technologies, Volume 1 (Long and Short Papers)",
    month = jun,
    year = "2019",
    address = "Minneapolis, Minnesota",
    publisher = "Association for Computational Linguistics",
    url = "https://aclanthology.org/N19-1300/",
    doi = "10.18653/v1/N19-1300",
    pages = "2924--2936"
}

@inproceedings{mihaylov-etal-2018-suit,
    title = "Can a Suit of Armor Conduct Electricity? A New Dataset for Open Book Question Answering",
    author = "Mihaylov, Todor  and
      Clark, Peter  and
      Khot, Tushar  and
      Sabharwal, Ashish",
    editor = "Riloff, Ellen  and
      Chiang, David  and
      Hockenmaier, Julia  and
      Tsujii, Jun{'}ichi",
    booktitle = "Proceedings of the 2018 Conference on Empirical Methods in Natural Language Processing",
    month = oct # "-" # nov,
    year = "2018",
    address = "Brussels, Belgium",
    publisher = "Association for Computational Linguistics",
    url = "https://aclanthology.org/D18-1260/",
    doi = "10.18653/v1/D18-1260",
    pages = "2381--2391"
}

@inproceedings{dagan2005pascal,
  title={The pascal recognising textual entailment challenge},
  author={Dagan, Ido and Glickman, Oren and Magnini, Bernardo},
  booktitle={Machine learning challenges workshop},
  pages={177--190},
  year={2005},
  organization={Springer}
}

@inproceedings{
hendrycks2021measuring,
title={Measuring Massive Multitask Language Understanding},
author={Dan Hendrycks and Collin Burns and Steven Basart and Andy Zou and Mantas Mazeika and Dawn Song and Jacob Steinhardt},
booktitle={International Conference on Learning Representations},
year={2021},
url={https://openreview.net/forum?id=d7KBjmI3GmQ}
}

@inproceedings{bisk2020piqa,
  title={Piqa: Reasoning about physical commonsense in natural language},
  author={Bisk, Yonatan and Zellers, Rowan and Gao, Jianfeng and Choi, Yejin and others},
  booktitle={Proceedings of the AAAI conference on artificial intelligence},
  volume={34},
  pages={7432--7439},
  year={2020}
}

@article{sakaguchi2021winogrande,
  title={Winogrande: An adversarial winograd schema challenge at scale},
  author={Sakaguchi, Keisuke and Bras, Ronan Le and Bhagavatula, Chandra and Choi, Yejin},
  journal={Communications of the ACM},
  volume={64},
  number={9},
  pages={99--106},
  year={2021},
  publisher={ACM New York, NY, USA}
}

@inproceedings{zellers-etal-2019-hellaswag,
    title = "{H}ella{S}wag: Can a Machine Really Finish Your Sentence?",
    author = "Zellers, Rowan  and
      Holtzman, Ari  and
      Bisk, Yonatan  and
      Farhadi, Ali  and
      Choi, Yejin",
    editor = "Korhonen, Anna  and
      Traum, David  and
      M{\`a}rquez, Llu{\'i}s",
    booktitle = "Proceedings of the 57th Annual Meeting of the Association for Computational Linguistics",
    month = jul,
    year = "2019",
    address = "Florence, Italy",
    publisher = "Association for Computational Linguistics",
    url = "https://aclanthology.org/P19-1472/",
    doi = "10.18653/v1/P19-1472",
    pages = "4791--4800"
}

@article{clark2018think,
  title={Think you have solved question answering? try arc, the ai2 reasoning challenge},
  author={Clark, Peter and Cowhey, Isaac and Etzioni, Oren and Khot, Tushar and Sabharwal, Ashish and Schoenick, Carissa and Tafjord, Oyvind},
  journal={arXiv preprint arXiv:1803.05457},
  year={2018}
}

@misc{eval-harness,
  author       = {Gao, Leo and Tow, Jonathan and Abbasi, Baber and Biderman, Stella and Black, Sid and DiPofi, Anthony and Foster, Charles and Golding, Laurence and Hsu, Jeffrey and Le Noac'h, Alain and Li, Haonan and McDonell, Kyle and Muennighoff, Niklas and Ociepa, Chris and Phang, Jason and Reynolds, Laria and Schoelkopf, Hailey and Skowron, Aviya and Sutawika, Lintang and Tang, Eric and Thite, Anish and Wang, Ben and Wang, Kevin and Zou, Andy},
  title        = {The Language Model Evaluation Harness},
  month        = 07,
  year         = 2024,
  publisher    = {Zenodo},
  version      = {v0.4.3},
  doi          = {10.5281/zenodo.12608602},
  url          = {https://zenodo.org/records/12608602}
}

@inproceedings{kwon2023efficient,
  title={Efficient Memory Management for Large Language Model Serving with PagedAttention},
  author={Woosuk Kwon and Zhuohan Li and Siyuan Zhuang and Ying Sheng and Lianmin Zheng and Cody Hao Yu and Joseph E. Gonzalez and Hao Zhang and Ion Stoica},
  booktitle={Proceedings of the ACM SIGOPS 29th Symposium on Operating Systems Principles},
  year={2023}
}

@article{paszke2019pytorch,
  title={Pytorch: An imperative style, high-performance deep learning library},
  author={Paszke, Adam and Gross, Sam and Massa, Francisco and Lerer, Adam and Bradbury, James and Chanan, Gregory and Killeen, Trevor and Lin, Zeming and Gimelshein, Natalia and Antiga, Luca and others},
  journal={Advances in neural information processing systems},
  volume={32},
  year={2019}
}

@article{zhang2022mixture,
  title={Mixture of attention heads: Selecting attention heads per token},
  author={Zhang, Xiaofeng and Shen, Yikang and Huang, Zeyu and Zhou, Jie and Rong, Wenge and Xiong, Zhang},
  journal={arXiv preprint arXiv:2210.05144},
  year={2022}
}

@inproceedings{chitty2025moe,
  title={MoE-Inference-Bench: Performance Evaluation of Mixture of Expert Large Language and Vision Models},
  author={Chitty-Venkata, Krishna Teja and Howland, Sylvia and Azar, Golara and Soboleva, Daria and Vassilieva, Natalia and Raskar, Siddhisanket and Emani, Murali and Vishwanath, Venkatram},
  booktitle={Proceedings of the SC'25 Workshops of the International Conference for High Performance Computing, Networking, Storage and Analysis},
  pages={1502--1511},
  year={2025}
}

@article{go2025moetuner,
  title={Moetuner: Optimized mixture of expert serving with balanced expert placement and token routing},
  author={Go, Seokjin and Mahajan, Divya},
  journal={arXiv preprint arXiv:2502.06643},
  year={2025}
}

@article{vaswani2017attention,
  title={Attention is all you need},
  author={Vaswani, Ashish and Shazeer, Noam and Parmar, Niki and Uszkoreit, Jakob and Jones, Llion and Gomez, Aidan N and Kaiser, {\L}ukasz and Polosukhin, Illia},
  journal={Advances in neural information processing systems},
  volume={30},
  year={2017}
}
